\newcommand{\vald}{VALD\xspace}
\newcommand{\optd}{OPTD\xspace}
\newcommand{\algorithmname}{Algorithm}
\newcommand{\equationname}{Eq.}
\newcommand{\theoremname}{Thm.}
\newcommand{\propositionname}{Prop.}
\begin{document}

\newcommand\relatedversion{}
\renewcommand\relatedversion{\thanks{The full version of the paper can be accessed at \protect\url{https://arxiv.org/abs/1902.09310}}} 

\title{Towards Optimization and Model Selection for Domain Generalization: A Mixup-guided Solution}
\author{Wang Lu
\and Jindong Wang \thanks{Wang Lu is with Tsinghua University, Beijing, China. Email: luw12@tsinghua.org.cn. Jindong Wang and Xing Xie are with Microsoft Research Asia, Beijing, China. Email: \{jindong.wang, xingx\}@microsoft.com. Yidong Wang is with National Engineering Research Center for Software Engineering, Peking University, Beijing, China. Email: yidongwang37@gmail.com. Corresponding author: Jindong Wang.}
\and Yidong Wang
\and Xing Xie}

\date{}

\maketitle


\fancyfoot[R]{\scriptsize{Copyright \textcopyright\ 2024 by SIAM\\
Unauthorized reproduction of this article is prohibited}}





\begin{abstract} \small\baselineskip=9pt 
The distribution shifts between training and test data typically undermine the performance of models.
In recent years, lots of work pays attention to domain generalization (DG) where distribution shifts exist and target data are unseen.
Despite the progress in algorithm design, two foundational factors have long been ignored: 1) the optimization for regularization-based objectives, and 2) the model selection for DG since no knowledge about the target domain can be utilized.
In this paper, we propose Mixup guided optimization and selection techniques for DG. 
For optimization, we utilize an adapted Mixup to generate an out-of-distribution dataset that can guide the preference direction and optimize with Pareto optimization.
For model selection, we generate a validation dataset with a closer distance to the target distribution, and thereby it can better represent the target data.
We also present some theoretical insights behind our proposals.
Comprehensive experiments demonstrate that our model optimization and selection techniques can largely improve the performance of existing domain generalization algorithms and even achieve new state-of-the-art results.
\end{abstract}

\section{Introduction}


Deep learning has been widely adopted in daily-life applications~\cite{nambiar2023deep}, such as face recognition~\cite{chai2023recognizability}, speech processing~\cite{tang2023hybrid}, healthcare~\cite{danassis2023limited}, human activity recognition~\cite{jeyakumar2023x}, etc.
The success of these methods heavily relies on expensive and laborious labeled data. 
To reduce such reliance, a common strategy is to leverage data from another domain to improve the model's generalization performance.
However, distribution shift naturally exists among different domains, not to mention that the training and test distributions are also different.

To tackle the distribution shift, transfer learning~\cite{pan2009survey} is a popular paradigm that reuses the pre-trained models by fine-tuning on the limited target data.
Domain adaptation (DA)~\cite{wilson2020survey} is a branch of transfer learning and has received great attention recently.
DA bridges the distribution gap between two domains by instance reweighting or distribution alignment techniques.
Although DA has shown effectiveness in multiple fields, it needs access to both source data and target data. 
In recent years, domain generalization, i.e., out-of-distribution generalization, has attracted increasing interests~\cite{wang2021generalizing}. 
DG aims to learn a model that can generalize to an \emph{unseen} target domain when given source data only from several different but related domains~\cite{wang2021generalizing,choi2023progressive}. 

Much prior work has focused on data manipulation~\cite{shankar2018generalizing}, representation learning~\cite{jia2020single}, and learning strategy~\cite{rame2021fishr} based DG methods. 
However, two foundational limitations have long been ignored.
First, the regularization items introduced by most DG approaches~\cite{xu2021fourier,wang2021generalizing,planamente2022domain} can conflict with both the original goal (i.e. the classification item) and each other during training. 
For example, CORAL~\cite{sun2016deep} introduced the correlation alignment loss for second-order statistics alignment while DANN~\cite{ganin2015unsupervised} introduced the domain discriminator loss for domain-adversarial training.
When optimizing multiple objectives, decreasing the overall objective value can be at the expense of damaging one of the training objectives, e.g., the classification goal~\cite{lv2021pareto}.
Therefore, these items for different purposes can have conflicts with the original classification goal, impeding the performance.
Second, the traditional model selection methods, e.g. selecting via validation data split from training data~\cite{refaeilzadeh2009cross}, are not suitable in the DG scenario since target data is unseen and has different distributions from the training data.
The popular DomainBed~\cite{gulrajani2020search} utilized three model selection techniques without any guarantee, but experiments illustrated that all three techniques could not achieve acceptable performance for DG.
Due to different distributions between validation and target data, these techniques can only obtain a biased estimation of the target accuracy.
Without proper model selection strategies, it is difficult to comprehensively evaluate different algorithms and deploy them in real applications.

To deal with the above two issues, we propose a Mixup-guided optimization and selection solution for domain generalization to tackle these issues.
Specifically, we generate two datasets based on Mixup~\cite{zhang2018mixup}, a simple but effective data augmentation method. 
We slightly adapt Mixup for our purposes and generate one dataset for model optimization (\optd) and one dataset for model selection (\vald). 
For model optimization, we first compute the gradients on \optd with the same model and then utilize the gradients to guide the balance between the classification item and the generalization item. 
For model selection, since the different training and test distributions make it infeasible to select models based on training data, we replace the traditional validation data split from training data with \vald to choose the best model. 

Our contributions can be summarized as follows:
\begin{enumerate}
    \item We aim to tackle the two fundamental challenges in DG: the optimization of regularization-based DG approaches and model selection.
    To the best of our knowledge, it is the \emph{first} work towards solving these challenges simultaneously.
    \item We propose a simple yet effective Mixup-guided universal~\footnote{There exist a lot of regularization-based DG approaches~\cite{xu2021fourier,wang2021generalizing,planamente2022domain} and some of them still remain competitive. For better analysis of the problem, our very specific focuses are the very fundamental approaches such as DANN~\cite{ganin2015unsupervised} and CORAL~\cite{sun2016deep}, which prove to be simple and effective. Additionally, our approach can also work with other approaches.} solution to resolve these issues. We also provide some theoretical insights to our solution. 
    \item Comprehensive experiments illustrate that these two techniques can make traditional methods rework and even have better performance than state-of-the-art methods.
\end{enumerate}

\section{Related Work}
\subsection{Domain Generalization}
Given one or several different but related domains, domain generalization (DG) aims to learn a model that can generalize well on unseen target domains.
Existing domain generalization work can be grouped into three categories~\cite{wang2021generalizing}: data manipulation~\cite{xu2021fourier}, representation learning~\cite{shi2021gradient}, and learning strategy~\cite{mahajan2021domain}. 
Fact~\cite{xu2021fourier} tried to linearly interpolate between the amplitude spectrums of two images which were thought related to classification. 
Fish~\cite{shi2021gradient} tried to maximize the inner product between gradients from different domains for generalization.
Mahajan et al.~\cite{mahajan2021domain} proposed MatchDG and tried to deal with domain generalization from the view of a structural causal model.

Although these three types of methods all prove their effectiveness in domain generalization, it is often inevitable to introduce some regularized items for better generalization. 
However, when directly fixing the balance between the regularized item and the classification item, and only considering reducing the overall objective value, some objectives, e.g. classification ability, might be damaged~\cite{lv2021pareto}.
Little work pays attention to this field, and our paper tries to utilize Mixup guided optimization to solve it.

\subsection{Model Selection}
An introduction to model selection was given in \cite{zucchini2000introduction}. 
The most common model selection method for machine learning is cross-validation.
For domain generalization, little work pays attention to model selection.
\cite{gulrajani2020search} introduced three common model selection methods, namely, training-domain validation set, leave-one-domain-out cross-validation, and test-domain validation set.
Moreover, \cite{li2022finding} demonstrated that leave-one-domain-out cross-validation was unbiased and was better than the other two methods.
However, training-domain validation set required the assumption that the training and test examples followed similar distributions.
Leave-one-domain-out cross-validation sacrificed the quantity of training data that might influence the performance. 
Test-domain validation set was impossible for domain generalization where no target data could be seen.
Recently, another method~\cite{ye2021towards} combined validation accuracy with feature variation and selected the model with high validation accuracy as well as low variation.
However, they were only verified in the visual field and required more computational costs. 
Moreover, it was difficult to obtain a balance between the accuracy and the variation and it was hard to compute exactly.
Model selection for domain generalization is still in the infant.

\subsection{Model Optimization}
Weight-based optimization is an intuitive strategy to optimize multiple objectives. 
When there exist multiple objectives and we cannot further decrease all objectives simultaneously, we obtain a set of so-called Pareto optimal solutions.
Multi-objective gradient-based optimization leverages the gradients of objectives to reach the final Pareto optimal solution with specific goals.
Mahapatra et al.~\cite{mahajan2021domain} proposed Exact Pareto Optimal (EPO) Search to find a preference-specific Pareto optimal solution.
Lv et al.~\cite{lv2021pareto} introduced EPO to domain adaptation and proposed ParetoDA to control the overall optimization direction.
However, ParetoDA required access to the target which was impossible for domain generalization.


\section{Preliminaries}

\subsection{Problem Formulation}

We follow the definition given in \cite{wang2021generalizing}.
In domain generalization, we are given $M$ training (source) domains, $\mathcal{D}^S = \{\mathcal{D}^i | i =1,2, \cdots, M \}$. 
Each domain has $n_i$ data, $\mathcal{D}^i = \{(\mathbf{x}_j^i, y_j^i) \}_{j=1}^{n_i}$ where $\mathbf{x}_j^i\in \mathcal{X}^i$ and $y_j^i\in \mathcal{Y}^i$.
There also exists an unlabeled target domain which is unseen during training, $\mathcal{D}^T = \{\mathbf{x}_j^T \}_{j=1}^{n_T}$ where $\mathbf{x}_j^i\in \mathcal{X}^T$, and $n_T$ is the number of data in the target.
For simplicity, in this paper, we assume that the target only contains one domain and all domains share the same input space and the same label space, i.e. $\mathcal{X}^1 = \mathcal{X}^2 = \cdots = \mathcal{X}^M = \mathcal{X}^T = \mathcal{X}, \mathcal{Y}^1 = \mathcal{Y}^2 = \cdots = \mathcal{Y}^M = \mathcal{Y}^T = \mathcal{Y}$.
$\mathcal{Y}^T$ is the target label space.
Note that data shifts exist ubiquitously across domains, which means $\mathbb{P}_{XY}^i \neq \mathbb{P}_{XY}^j, i, j\in \{1,2,\cdots, M, T\}$, where $\mathbb{P}$ denotes distribution.
The goal of domain generalization is to learn a robust and generalized predictive function: $h: \mathcal{X} \to \mathcal{Y}$ from $M$ training sources to achieve minimum prediction error  on the unseen target domain $\mathcal{D}^T$, i.e. $\min_h \mathbb{E}_{(\mathbf{x},y)\in D^T}[\ell(h(\mathbf{x}),y)]$ where $\mathbb{E}$ is the expectation notation and $\ell$ is a loss function. 

Most DG approaches have regularization-based objectives~\cite{ganin2015unsupervised,sun2016deep,xu2021fourier,wang2021generalizing,planamente2022domain}, formulated as: 
\begin{equation}
    \min_h\mathbb{E}_{(\mathbf{x},y)\sim \mathbb{P}^{S}} \ell_0(h(\mathbf{x}),y)+ \lambda_1\ell_1 +\cdots +\lambda_k\ell_k.
    \label{eqa:allgoal}
\end{equation}
$\ell_0$ is the classification loss while $\ell_1,\cdots,\ell_k$ are regularization losses.
$\lambda_1,\cdots,\lambda_k$ are hyperparameters which are fixed, and $k$ is the number of regularization items.
Since $\mathcal{D}^T$ is unseen during training, existing methods~\cite{gulrajani2020search} typically select the best model for testing according to a validation dataset, $\mathcal{D}_{val}^S$, split from sources:
\begin{equation}
    \arg\max_{h \in \mathcal{H}} \mathbb{E}_{(\mathbf{x},y)\in \mathcal{D}_{val}^S} Accuracy((h(\mathbf{x}),y).
    \label{eqa:accv}
\end{equation}

\subsection{Background}
\paragraph{Data split}

We denote the whole distribution of the training domains as $\mathbb{P}^S_{XY} = \sum_{i=1}^m \pi_i \mathbb{P}^i_{XY}$, where $\pi_i>0$ is the proportion of each domain and $\sum_i \pi_i=1$.
In practice, we typically split randomly $\mathcal{D}^S$ into two parts, one for training ($\mathcal{D}^S_{tra}$) and the other for validation ($\mathcal{D}^S_{val}$). 
$\mathcal{D}^S_{tra}$ and $\mathcal{D}^S_{val}$ share the same distribution, $\mathbb{P}^S_{XY}$. 
For simplicity, we use $\mathcal{D}^S$ to denote the training data if no subscript is added.

\paragraph{Mixup}
Mixup~\cite{zhang2018mixup,xu2021fourier} is a simple but effective data augmentation technique.
Mixup incorporates the prior knowledge that linear interpolations of feature vectors should lead to linear interpolations of the corresponding target labels.
It generates virtual training examples based on two random data points:
\begin{equation}
	\label{eqa:mix}
	\begin{aligned}
	    \lambda\sim Beta(\alpha, \alpha),
		\tilde{\mathbf{x}} =\lambda \mathbf{x}_i + (1-\lambda) \mathbf{x}_j,
		\tilde{y} =  \lambda y_i + (1-\lambda)y_j,
	\end{aligned}
\end{equation}
where $Beta(\alpha, \alpha)$ is a Beta distribution and $\alpha \in (0,\infty)$ is a hyperparameter.

\section{Our Approach}

In this section, we first introduce the Mixup-guided model optimization technique. 
Then, we introduce our Mixup-guided model selection technique and explain the insights.
With these two techniques, we can make traditional methods rework, e.g. DANN, and even achieve better results compared to state-of-the-art methods.


\subsection{Gradient-based Model Optimization}
We introduce how to reduce conflicts and learn a better generalization model here. 
We first recall some related definitions on Pareto optimal solutions following \cite{zitzler1999multiobjective,lv2021pareto}.

Consider $m$ objectives, each with a non-negative loss function $\ell_i(\boldsymbol{\theta})$ where $\boldsymbol{\theta}$ is the parameters.
There can be no solution that reaches the optima of each objective simultaneously since they can conflict with each other. 
However, we can still obtain a set of so-called Pareto optimal solutions.

\begin{Definition}[Pareto dominance]
Suppose two solutions $\boldsymbol{\theta}_1, \boldsymbol{\theta}_2 \in \mathbb{R}^d$, define $\boldsymbol{\theta}_1 \prec \boldsymbol{\theta}_2$ if $\ell_i(\boldsymbol{\theta}_1) \leq \ell_i(\boldsymbol{\theta}_2), \forall i \in \{1,2,\cdots, m\}$ and $\ell_i(\boldsymbol{\theta}_1) < \ell_i(\boldsymbol{\theta}_2), \exists i \in \{1,2,\cdots, m\}$. Then we say $\boldsymbol{\theta}_1$ dominates $\boldsymbol{\theta}_2$. 
\end{Definition}

\begin{Definition}[Pareto optimality]
If a solution $\boldsymbol{\theta}_1$ dominates $\boldsymbol{\theta}_2$, then $\boldsymbol{\theta}_1$ is clearly preferable as it performs better or equally on each objective.
A solution $\boldsymbol{\theta}^*$ is Pareto optimality if it is not dominated by any other solutions.
\end{Definition}

\begin{Definition}[Pareto front]
The set of all Pareto optimal solutions in loss space is Pareto front, where each point represents a unique solution.
\end{Definition}

\begin{Definition}[Preference vector]
A Pareto optimal solution can be viewed as an intersection of the Pareto front with a specific direction in the loss space. We refer to this direction as the preference vector.
\end{Definition}

Now, back to our problem.
Assume that there are classification loss, i.e. $\ell_0(h_{c}(h_f(\mathbf{x})),y)$ and $k$ regularization items, i.e. $\ell_1, \cdots, \ell_k$.
When optimizing the whole objective, a trade-off is required among the different losses for better generalization since these objectives can have conflicts. 
Optimizing according to preference directions can be a possible solution~\cite{mahapatra2020multi,lv2021pareto}.
However, the previous method on Pareto domain adaptation~\cite{lv2021pareto} requires access to the unlabeled target distribution, which is unrealistic in the DG scenario.
Note that we expect generalization capability in DG.
Therefore, we expect that there exists a preference vector reflecting generalization. 
We utilize adapted Mixup to generate a dataset \textbf{(\optd)} with a different distribution from sources to compute the preference vector for a better generalization capability.

We add several restrictions to alleviate the noisy generation via Mixup.
In particular, we split the data augmentation into two parts, one within the same class but different domains and the other within the same domain but arbitrary classes. 
The adapted Mixup can enlarge diversity and reduce the influence of redundant domain information~\cite{xu2021fourier,zhou2020domain,yao2022improving}. 
Each part contains half of \optd.
The first part can be formulated as:
\begin{equation}
	\label{eqa:omix1}
	\begin{aligned}
		\tilde{\mathbf{x}} =\lambda \mathbf{x}_i + (1-\lambda) \mathbf{x}_j,
		\tilde{y} =  y_i = y_j,
		\text{where}~d_i \neq d_j.
	\end{aligned}
\end{equation}

And the second part can be formulated as:
\begin{equation}
	\label{eqa:omix2}
	\begin{aligned}
		\tilde{\mathbf{x}} =\lambda \mathbf{x}_i + (1-\lambda) \mathbf{x}_j,
		\tilde{y} =  \lambda y_i + (1 - \lambda)  y_j,
		\text{where}~d_i = d_j.
	\end{aligned}
\end{equation}


Given OPTD, we can compute the optimization direction towards the desired Pareto optimal solution.

Considering $(k+1)$ losses, classification loss $\ell_s = \ell_0(h_{c}(h_f(\mathbf{x})),y)$ and regularization loss $\ell_1, \cdots, \ell_k$, the update direction $\mathbf{d}$ can be modeled as a convex combination of gradients of these $k+1$ losses, i.e. $\mathbf{d} = \mathbf{G}\boldsymbol{\omega},$ where $\boldsymbol{\omega}=(w_0, w_1, w_2, \cdots, w_k),$ $\sum_{i=0}^k w_k = 1,$ and $\mathbf{G} = [\nabla_{\boldsymbol{\theta}_f} \ell_s,  \nabla_{\boldsymbol{\theta}_f} \ell_1, \cdots, \nabla_{\boldsymbol{\theta}_f} \ell_k]$.
Since $\ell_s$ and $\ell_1, \cdots, \ell_k$ may optimize different networks, 
we only consider the shared network, $h_f$. 
The main purpose of gradient-based optimization is to find $\mathbf{d}$ to minimize all the losses and make the direction along with the preferred one.
Here, we obtain the first constraint, $\mathbf{d}^T \mathbf{g}_j \geq 0$, where $\mathbf{g}_j$ is the $j$-th column of $\mathbf{G}$. 
This constraint ensures all losses can be minimized simultaneously. 

We utilize \optd as our preference guidance, and follow EPO~\cite{mahapatra2020multi} to obtain $\boldsymbol{\omega}$.
We denote $\ell_{optd}$ as the classification loss on \optd, and we can directly obtain the gradient descent direction, $\mathbf{g}_{optd} = \nabla_{\boldsymbol{\theta}_f} \ell_{optd}$. 
We replace the guidance direction $\mathbf{d}_{bal}$ in EPO with $\mathbf{g}_{optd}$ as dynamical guidance of the optimization direction similar to \cite{lv2021pareto}. 
The optimization can be formulated as a linear programming (LP) problem:

\begin{equation}
    \begin{aligned}
        \boldsymbol{\omega}^* =  &\arg\max_{\boldsymbol{\omega}\in \Delta^{m-1}} (\mathbf{G}\boldsymbol{\omega})^T(I(\ell_{optd} > 0 )\mathbf{g}_{optd}\\
        &+I(\ell_{optd}=0) \mathbf{G}\mathbf{1}/m),\\
        s.t.&(\mathbf{G}\boldsymbol{\omega})^T\mathbf{g}_j\geq I(J\neq \emptyset)(\mathbf{g}_{optd}^T \mathbf{g}_j), \forall j\in \bar{J} - J^*,\\
        & (\mathbf{G}\boldsymbol{\omega})^T\mathbf{g}_j\geq 0, \forall j\in J^*.
    \end{aligned}
    \label{eqa:epo}
\end{equation}

$\Delta^{m-1}$ is $m$ dimensional simple, which means $\boldsymbol{\omega}\in \Delta^{m-1}$ represents $\boldsymbol{\omega} \in \mathbb{R}^m, w_i\geq 0, \sum w_i =1$.
In our situation, $m = k+1$.
$I(\cdot)$ is an indicator function, $\mathbf{1}\in \mathbb{R}^m$ is a vector whose elements are all $1$.
$J = \{j|\mathbf{g}_{optd}^T \mathbf{g}_j >0 \}, \bar{J} = \{j|\mathbf{g}_{optd}^T \mathbf{g}_j <0 \},$ and $J^* = \{j|\mathbf{g}_{optd}^T \mathbf{g}_j = \max_{j'}  \mathbf{g}_{optd}^T \mathbf{g}_{j'}\}$.
The following theorem ensures that the optimization will not over-fit on \optd.

\begin{theorem}[Theorem 1 in \cite{lv2021pareto}]
Let $\boldsymbol{\omega}^*$ be the solution of the problem in \equationname~\eqref{eqa:epo}, and $\mathbf{d}^* = \mathbf{G}\boldsymbol{\omega}^*$ be the resulted update direction. 
If $\ell_{optd} = 0$, then the dominating direction $\mathbf{d}^*$ becomes a descent direction, i.e.,
\begin{equation}
(\mathbf{d}^*)^T\mathbf{g}_j \geq 0, \forall j \in \{1,2,\cdots,k+1\}.    
\end{equation}

On the other hand, if $\ell_{optd}>0$, let $\gamma^* = (\mathbf{d}^*)^T\mathbf{g}_{optd}$ be the objective value of the problem in \equationname~\eqref{eqa:epo}.
Then,
\begin{equation}
    \begin{cases}
        (\mathbf{d}^*)^T\mathbf{g}_{optd}>0, & \gamma^*>0\\
        (\mathbf{d}^*)^T\mathbf{g}_j\geq 0, \forall j \in \{1,2,\cdots,k+1\}, & \gamma^*\leq 0.
    \end{cases}
\end{equation}
\label{thm:epo}
\end{theorem}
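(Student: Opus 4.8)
The plan is to treat the statement as a structural result about the linear program \eqref{eqa:epo} and to decompose the argument along the regimes that the indicator functions in \eqref{eqa:epo} carve out, reading off in each regime what the active constraints guarantee about $\mathbf{d}^{*}=\mathbf{G}\boldsymbol{\omega}^{*}$. The single tool I would establish up front is the classical minimum-norm (MGDA) lemma: if $\boldsymbol{\omega}_{0}=\arg\min_{\boldsymbol{\omega}\in\Delta^{m-1}}\|\mathbf{G}\boldsymbol{\omega}\|^{2}$ and $\mathbf{d}_{0}=\mathbf{G}\boldsymbol{\omega}_{0}$, then first-order optimality of the convex quadratic $\|\mathbf{G}\boldsymbol{\omega}\|^{2}$ over the simplex along each edge direction $\mathbf{e}_{j}-\boldsymbol{\omega}_{0}$ gives $\mathbf{d}_{0}^{T}\mathbf{g}_{j}\ge\|\mathbf{d}_{0}\|^{2}\ge 0$ for every $j$. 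This lemma does double duty: it exhibits a genuine common-descent direction, and it certifies that the feasible set of \eqref{eqa:epo} is never empty, since $\boldsymbol{\omega}_{0}$ satisfies $(\mathbf{d}_{0})^{T}\mathbf{g}_{j}\ge 0\ge I(J\ne\emptyset)\,\mathbf{g}_{optd}^{T}\mathbf{g}_{j}$ on $\bar{J}-J^{*}$ and $(\mathbf{d}_{0})^{T}\mathbf{g}_{j}\ge 0$ on $J^{*}$.

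With the lemma in hand, the easy branches follow by inspection. When $\ell_{optd}=0$ the guidance gradient vanishes, so $\mathbf{g}_{optd}=\mathbf{0}$ and $\mathbf{g}_{optd}^{T}\mathbf{g}_{j}=0$ for all $j$, giving $J=\bar{J}=\emptyset$ and $J^{*}=\{1,\dots,k+1\}$; every surviving constraint then reads $(\mathbf{G}\boldsymbol{\omega})^{T}\mathbf{g}_{j}\ge 0$ and its index set $J^{*}$ exhausts all objectives, so any feasible $\boldsymbol{\omega}$ — in particular the maximizer $\boldsymbol{\omega}^{*}$ — already satisfies the desired $(\mathbf{d}^{*})^{T}\mathbf{g}_{j}\ge 0$, while feasibility is guaranteed by $\boldsymbol{\omega}_{0}$. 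For $\ell_{optd}>0$ with $\gamma^{*}>0$, the claim $(\mathbf{d}^{*})^{T}\mathbf{g}_{optd}>0$ is simply the definition of $\gamma^{*}$ as the optimal objective value, so there is nothing to prove.

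The substance of the theorem is the branch $\ell_{optd}>0$ with $\gamma^{*}\le 0$. I would first dispose of the sub-case $J=\emptyset$, i.e. $\max_{j'}\mathbf{g}_{optd}^{T}\mathbf{g}_{j'}\le 0$: here $I(J\ne\emptyset)=0$, so both groups of constraints collapse to $(\mathbf{G}\boldsymbol{\omega})^{T}\mathbf{g}_{j}\ge 0$, their index sets $(\bar{J}-J^{*})\cup J^{*}$ exhaust $\{1,\dots,k+1\}$, and feasibility via $\boldsymbol{\omega}_{0}$ delivers the conclusion exactly as before. The remaining sub-case $J\ne\emptyset$ is where the real work lies: the constraints of \eqref{eqa:epo} pin down only the indices in $J^{*}$ (forcing $(\mathbf{d}^{*})^{T}\mathbf{g}_{j}\ge 0$) and give a merely negative lower bound on $\bar{J}$, while the indices in $J\setminus J^{*}$ carry no explicit constraint at all.

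The main obstacle, therefore, is to show that when the best attainable alignment with the guidance is non-positive, the maximizer cannot trade away descent on the uncontrolled indices $J\setminus J^{*}$ (or on $\bar{J}$) in exchange for alignment, so that $(\mathbf{d}^{*})^{T}\mathbf{g}_{j}\ge 0$ propagates to every $j$. My plan is to argue this through the optimality conditions of the linear program: write the KKT system at $\boldsymbol{\omega}^{*}$, use complementary slackness to identify the active descent constraints, and combine $\gamma^{*}\le 0$ with the strict positivity of $\mathbf{g}_{optd}^{T}\mathbf{g}_{j}$ for $j\in J$ to force the dual multipliers into a configuration under which non-negativity of $(\mathbf{d}^{*})^{T}\mathbf{g}_{j}$ extends from $J^{*}$ to all indices. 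I anticipate the delicate point to be ruling out a maximizer that sits at a vertex of the feasible polytope where the linear objective is largest but descent fails on $\bar{J}$; controlling this appears to require the structural role of $J^{*}$ and, precisely when alignment cannot be improved, the reduction of the realized update to the common-descent direction $\mathbf{d}_{0}$ supplied by the minimum-norm lemma. I would treat this reconciliation of the alignment-maximizing optimizer with a genuine descent guarantee as the heart of the proof; the remaining branches and all feasibility statements are immediate once that lemma is available.
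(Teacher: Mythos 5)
Your proposal is not a complete proof: the branch you yourself call ``the heart of the proof'' --- $\ell_{optd}>0$, $\gamma^*\le 0$, $J\neq\emptyset$ --- is left as an unexecuted plan, and that branch is the only non-trivial content of the theorem. The parts you do carry out are correct: the minimum-norm (MGDA) point $\boldsymbol{\omega}_0$ does certify feasibility of \equationname~\eqref{eqa:epo}; $\ell_{optd}=0$ forces $\mathbf{g}_{optd}=\mathbf{0}$ (a global minimum of a non-negative differentiable loss), hence $J=\bar J=\emptyset$, $J^*=\{1,\dots,k+1\}$, and the conclusion follows from feasibility alone; the $\gamma^*>0$ branch is tautological; and when $J=\emptyset$ the two constraint families cover every index with right-hand side $0$, so feasibility again suffices. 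But none of this touches the remaining case, and the KKT/complementary-slackness route you sketch for it runs into a concrete obstruction you do not resolve: for $j\in J\setminus J^*$ (and for indices with $\mathbf{g}_{optd}^T\mathbf{g}_j=0$) the LP \equationname~\eqref{eqa:epo} contains \emph{no} constraint at all, so neither feasibility nor complementary slackness gives any handle on $(\mathbf{d}^*)^T\mathbf{g}_j$ there, while for $j\in\bar J - J^*$ the constraint only lower-bounds $(\mathbf{d}^*)^T\mathbf{g}_j$ by the negative quantity $\mathbf{g}_{optd}^T\mathbf{g}_j$. Optimality of a linear objective over a polytope cannot by itself upgrade these to non-negativity; the maximizer sits at a vertex determined by whichever constraints are active, and the unconstrained indices are simply invisible to that analysis.

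For context, the paper does not prove this statement either: it imports it verbatim as Theorem 1 of \cite{lv2021pareto}, so your attempt has to stand on its own. In the EPO lineage from which the theorem descends, the hard case is closed not by LP duality but by a structural fact your sketch cannot use: there the guidance direction is itself a convex combination of the task gradients, $\mathbf{d}_{bal}=\mathbf{G}\mathbf{a}$, so the weight vector $\mathbf{a}$ is feasible (its constraints hold with equality on $\bar J - J^*$ and with the positive maximal inner product on $J^*$) and attains objective value $\|\mathbf{d}_{bal}\|^2$; hence $\gamma^*\le 0$ is only possible in degenerate configurations ($J=\emptyset$ or $\mathbf{d}_{bal}=\mathbf{0}$), each of which reduces to the feasibility argument. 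That mechanism does not transfer here, because $\mathbf{g}_{optd}$ is the gradient of an external loss on \optd and need not lie in the convex hull of $\{\mathbf{g}_j\}$. So the step you deferred is precisely the step that needs a new idea (or an additional hypothesis relating $\mathbf{g}_{optd}$ to the span of $\mathbf{G}$), and your proposal as written has a genuine gap there.
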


According to the above theorem and \cite{lv2021pareto}, we split the learning mechanism into two modes, pure descent mode where $\mathbf{d}*$ approximates the mean gradient $\mathbf{G}\mathbf{1}/m$, and guidance descent mode where $\mathbf{d}^*$ approximates $\mathbf{g}_{optd}$.
Practically, we utilize a small $\epsilon>0$ to relax the condition $\ell_{optd} = 0 $ or $>0$. 
$\gamma^* >0$ forces $\mathbf{d}^*$ to decrease the loss whose gradient is the most consistent with $\mathbf{g}_{optd}$ while $\gamma^* \leq 0$ only requires decreasing the training losses. 
Therefore, $\mathbf{g}_{optd}$ dynamically guides the optimization direction toward the desired Pareto solution. 
We require no prior knowledge of the preference vector nor access to unseen targets.
With the best Lp solver~\cite{cohen2021solving}, we only need $O(m^{2.38})$ to solve \equationname~\eqref{eqa:epo}, which can be ignored since $m$ is a small integer. 

\begin{figure}[!t]
	\centering
	\includegraphics[width=.5\textwidth]{./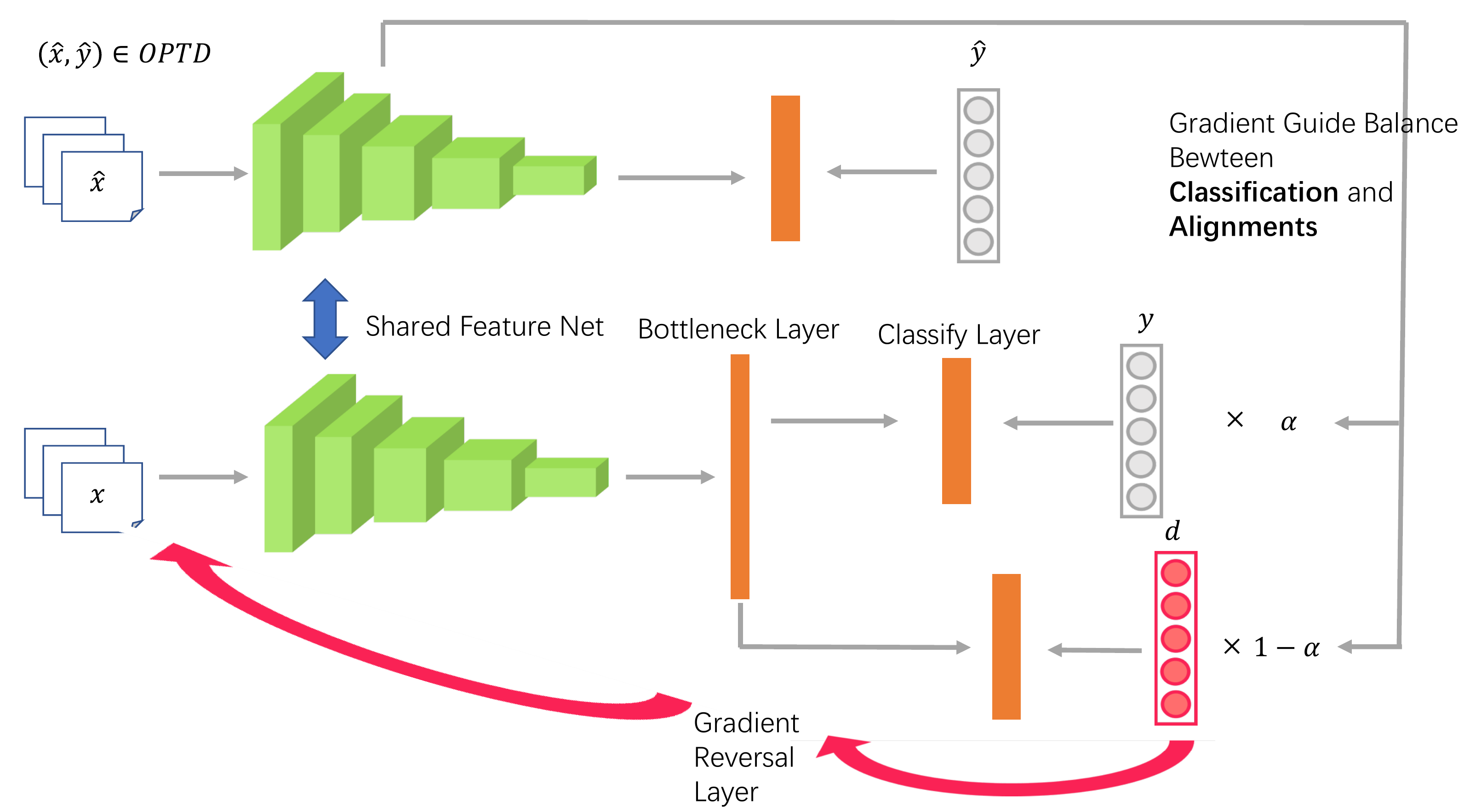}
  \vspace{-0.6cm}
	\caption{The framework.}
 \vspace{-0.6cm}
	\label{fig-frame}
\end{figure}

\subsection{Optimization Implementation Based on DANN}
Now, we give an implementation of our optimization technique based on DANN.
Domain-adversarial neural network (DANN)~\cite{ganin2015unsupervised} is a classic and effective approach to learn domain-invariant representations for both DA~\cite{ganin2015unsupervised} and DG~\cite{wang2021generalizing}.
DANN utilizes adversarial training which contains a feature extractor, a domain discriminator, and a classification network. 
The domain discriminator tries to discriminate domain labels while the feature network tries to generate features that can be able to confuse the domain discriminator, which thereby learns domain-invariant representation. 
It is an adversarial process and can be expressed as:
\begin{equation}
    \begin{aligned}
    \min_{h_f, h_c} &\mathbb{E}_{(\mathbf{x},y)\sim \mathbb{P}^{S}} \ell_0(h_{c}(h_f(\mathbf{x})),y)
    - \ell_1(h_{adv}(h_f(\mathbf{x})),d),\\
    &\min_{h_{adv}} \mathbb{E}_{(\mathbf{x},y)\sim \mathbb{P}^{S}} \ell_1(h_{adv}(h_f(\mathbf{x})),d),
    \end{aligned}
    \label{eqa:adver4}
\end{equation}
where $h_f, h_c,$ and $h_{adv}$ are the feature extractor, the classification layer, and the domain discriminator, respectively, and $d$ denotes the domain label.
To optimize \equationname~\eqref{eqa:adver4}, we iteratively optimize $h_{f}, h_{c}$, and $h_{adv}$.
An alternative for the iterative optimization is gradient reversal layer (GRL)~\cite{ganin2015unsupervised}.
The key is to solve the problems caused by the negative sign in \equationname~\eqref{eqa:adver4}.
GRL acts as an identity transformation that can be ignored during the forward propagation while it takes the gradient from the subsequent level and changes its sign before passing it to the preceding layer that reverses the gradient sign on $h_f$ during the backpropagation.
GRL solves the problems caused by the negative sign in \equationname~\eqref{eqa:adver4}.

Obviously, there can exist conflicts between $\ell_0(h_{c}(h_f(\mathbf{x})),y)$ and $\ell_1(h_{adv}(h_f(\mathbf{x})),d)$. 
We need a trade-off between these two items for better generalization.
To implement it, we just set $k=1, m=2$ and let the regularization item be $\ell_1(h_{adv}(h_f(\mathbf{x})),d)$. 
The framework is shown in \figurename~\ref{fig-frame}.
We give some theoretical insights as follows.

\begin{proposition}
\label{pro:dg2}
Let $\mathcal{X}$ be a space and $\mathcal{H}$ be a class of hypotheses corresponding to this space. Let $\mathbb{Q}$ and the collection $\{\mathbb{P}^i \}_{i=1}^M$ be distributions over $\mathcal{X}$ and let $\{\varphi_i \}_{i=1}^M$ be a collection of non-negative coefficient with $\sum_i \varphi_i = 1$. Let the object $\mathcal{O}$ be a set of distributions such that for every $\mathbb{S}\in \mathcal{O}$ the following holds
\begin{equation}
\label{eqa:dg2-con}
     d_{\mathcal{H}\Delta \mathcal{H}}(\sum_i \varphi_i\mathbb{P}^i, \mathbb{S}) \leq \max_{i,j} d_{\mathcal{H}\Delta \mathcal{H}}(\mathbb{P}^i,\mathbb{P}^j).
\end{equation}
Then, for any $h\in \mathcal{H}$, 
\begin{equation}
    \label{eqa:dg2}
    \begin{aligned}
    \varepsilon_{\mathbb{Q}}(h)\leq& \lambda' + \sum_i \varphi_i \varepsilon_{\mathbb{P}^i}(h) + \frac{1}{2}\min_{\mathbb{S}\in\mathcal{O}}  d_{\mathcal{H}\Delta \mathcal{H}}(\mathbb{S}, \mathbb{Q})\\& + \frac{1}{2}\max_{i,j} d_{\mathcal{H}\Delta \mathcal{H}}(\mathbb{P}^i, \mathbb{P}^j)
    \end{aligned}
\end{equation}
where $\lambda'$ is the error of an ideal joint hypothesis. 
\end{proposition}

According to \propositionname~\ref{pro:dg2}\footnote{Proofs can be found in Sec. A.1 of APPENDIX. } and \equationname~\eqref{eqa:dg2}, domain generalization aims to reduce loss generated by both classification and alignments.
Existing simple fixed alignment methods only focus on part of the overall objects which can bring conflicts when optimization.
And thereby they may impede the performance of generalization or classification.
In our optimization, introducing gradient-based optimization with adapted Mixup, we can simultaneously optimize the classification loss, $\varepsilon_{\mathbb{P}}$, and alignment loss, $d_{\mathcal{H}\Delta \mathcal{H}}$.
Consequently, the upper bound of $\varepsilon_T(h)$ can be tighter.

\subsection{Mixup-guided Model Selection}

In this section, we will introduce how to adapt Mixup to generate a new validation dataset and why it can be better than the original validation data split from the training part\footnote{We utilize the same amount of data for training for fairness although we generate some new samples.}.

Most validation methods typically perform model selection based on the performance on the validation dataset.
However, vanilla Mixup generates mixed labels which can be unsuitable to compute the evaluation metrics such as accuracy.
To deal with this issue, we slightly adapt Mixup and control the mixed data generation process.
Vanilla Mixup randomly chooses two samples from all data while ours randomly chooses two samples between the same classes, which makes the label unique and deterministic on one class:
\begin{equation}
	\label{eqa:vmix}
	\begin{aligned}
		\tilde{\mathbf{x}} =\lambda \mathbf{x}_i + (1-\lambda) \mathbf{x}_j,
		\tilde{y} =  y_i = y_j.
	\end{aligned}
\end{equation}

\begin{figure}[!t]
	\centering
	\subfigure[Case I]{
		\label{fig:case1}
		\includegraphics[height=.4\columnwidth]{./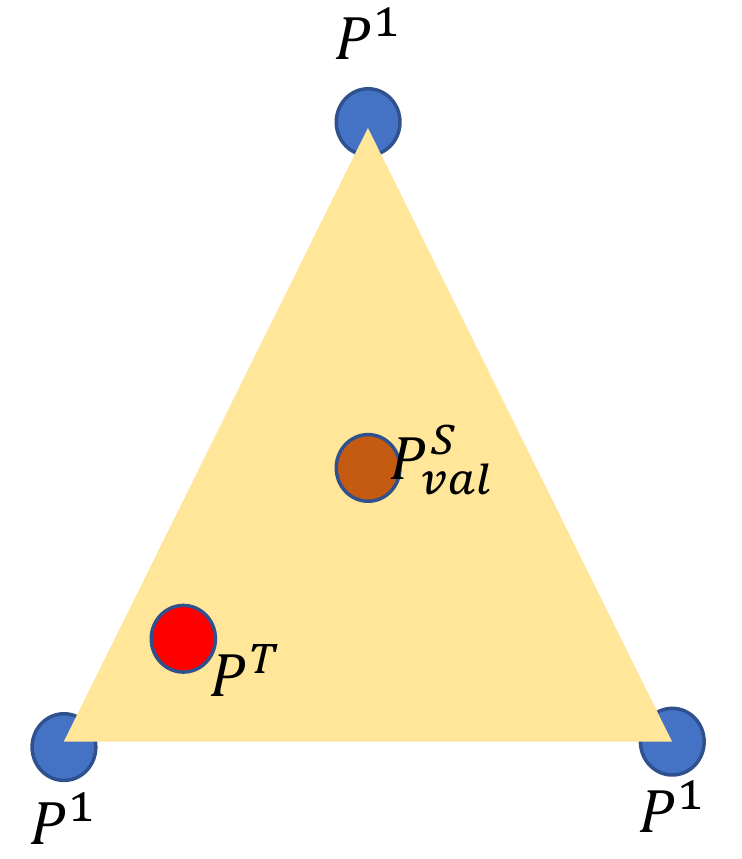}
	}
	\subfigure[Case II]{
		\label{fig:case2}
		\includegraphics[height=.4\columnwidth]{./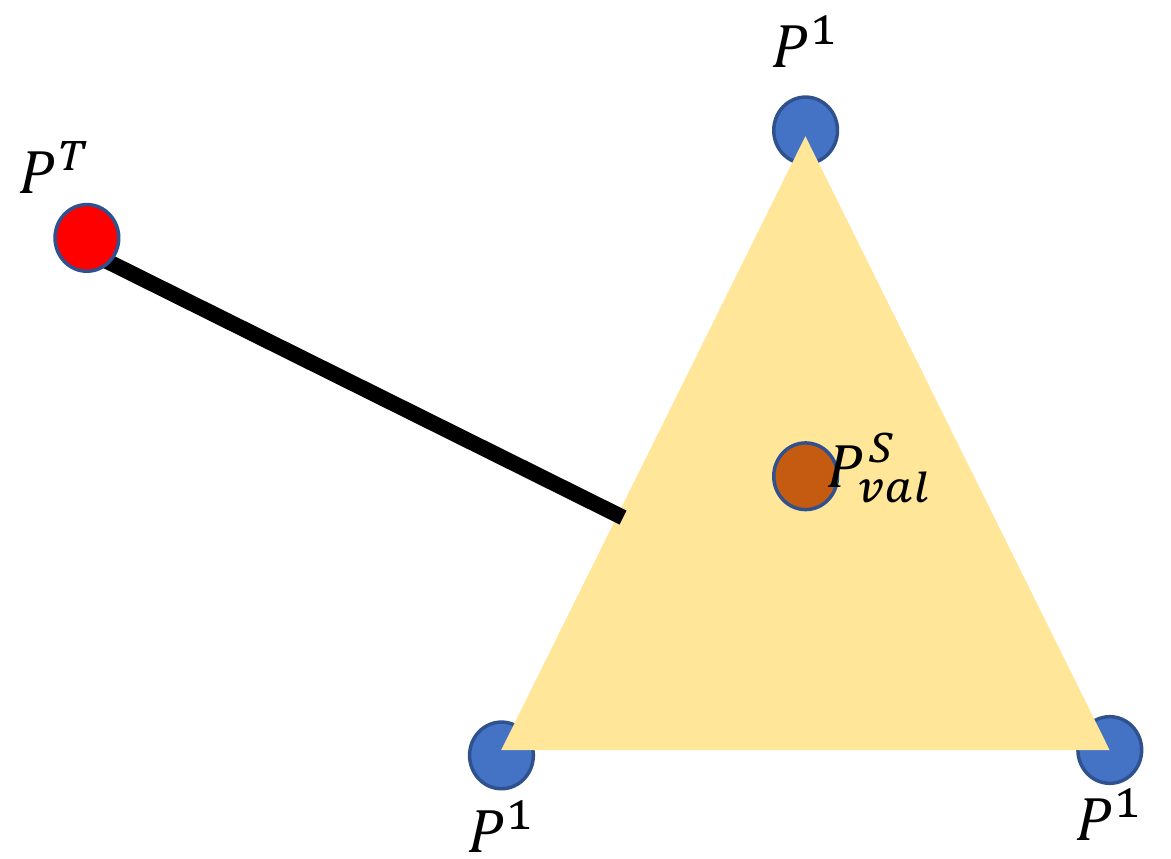}
	}
 \vspace{-.3cm}
	\caption{Toy examples of different validation datasets. 
	(a) Case I, the target is the convex combination of the sources. 
	As we can see from \figurename~\ref{fig:case1}, the distance between the original validation and the target is fixed while the target can be seen as part of \vald (the yellow part).
	\vald can even serve as an unbiased estimation of the target, which means it can get a better estimation of the target.
	(b) Case II, the target is out of the convex combination of the sources.
	The distance between the target and \vald is still smaller than the fixed distance between the target and origin validation data.}
	\label{fig-moti1}
 \vspace{-.4cm}
\end{figure}

We generate the same number of samples as the original validation data for fairness, denoted as \textbf{\vald}. 
We denote its distribution as $\mathbb{P}_{XY}^{\text{\vald}}$. 
In the following, we show why it can be better than the original validation data split from training data.

Since we select the model according to the accuracy on validation data, it can be better when validation data has a similar distribution to the unseen target. 
We mainly discuss two cases: (1) Convex case: the unseen target is in the convex combination of the sources; (2) General case: the general situation where the unseen target can be either inside or outside the convex combination of the sources. \footnote{Please note that we only deal with covariate shifts in this paper which means that no conditional shifts exist.}
To better demonstrate the advantage of our model selection technique, we describe two cases in \figurename~\ref{fig-moti1}.

\subsubsection{Convex Case}

The $\mathcal{H}$-divergence between two distributions $\mathbb{P}, \mathbb{Q}$ over a space $\mathcal{X}$ w.r.t. a hypothesis class $\mathcal{H}$ can be computed as~\cite{ben2010theory}:
\begin{equation}
    \label{eqa:def-hdive}
    d_{\mathcal{H}}(\mathbb{P},\mathbb{Q}) = 2\sup_{h\in \mathcal{H}} |Pr_{\mathbb{P}}(\mathbb{I}_h)- Pr_{\mathbb{Q}}(\mathbb{I}_h)|,
\end{equation}
where $\mathbb{I}_h = \{\mathbf{x}\in\mathcal{X}|h(\mathbf{x})=1\}$.
We typically consider the $\mathcal{H}\Delta \mathcal{H}$-divergence in~\cite{ben2010theory} where the symmetric difference
hypothesis class $\mathcal{H}\Delta \mathcal{H}$ is the set of functions characteristic to disagreements between hypotheses.

\begin{theorem}
\label{thm:da,ben}
(Theorem 2.1 in~\cite{sicilia2021domain}, modified from Theorem 2 in~\cite{ben2010theory}). Let $\mathcal{X}$ be a space and $\mathcal{H}$ be a class of hypotheses corresponding to this space. Suppose $\mathbb{P}$ and $\mathbb{Q}$ are distributions over $\mathcal{X}$. Then for any $h\in \mathcal{H}$, the following holds
\begin{equation}
    \label{eqa:da}
    \varepsilon_{\mathbb{Q}}(h) \leq \lambda'' + \varepsilon_{\mathbb{P}}(h)+ \frac{1}{2} d_{\mathcal{H}\Delta \mathcal{H}}(\mathbb{Q}, \mathbb{P})
\end{equation}
with $\lambda''$ the error of an ideal joint hypothesis for $\mathbb{Q}, \mathbb{P}$.
\end{theorem}

Since $\lambda''$ is small in the covariate shift, two items, $\varepsilon_{\mathbb{P}}(h)$ and $\frac{1}{2} d_{\mathcal{H}\Delta \mathcal{H}}(\mathbb{Q}, \mathbb{P})$, dominate the target error$\varepsilon_{\mathbb{Q}}(h)$.
Here $(\mathbb{Q}$ corresponds to the target $\mathbb{P}^T_{XY}$ while $\mathbb{P}$ corresponds to the validation distribution.
We expect $\varepsilon_{\mathbb{Q}}(h)$ small, and we select the model according to $\varepsilon_{\mathbb{P}}(h)$.
Therefore, to obtain more stable and accurate $\varepsilon_{\mathbb{Q}}(h)$, we expect $d_{\mathcal{H}\Delta \mathcal{H}}(\mathbb{Q}, \mathbb{P})$ small. 
For origin validation data, $\mathbb{P}^{S}_{val}= \sum_{i=1}^m \pi_i \mathbb{P}^i_{XY}$ where all $\pi$ are fixed.
For $\mathbb{Q}$, since in the first cases, $\mathbb{P}^T = \sum_{i=1}^m \phi_i \mathbb{P}^i_{XY}$ where $\phi$ is fixed and $\sum_i \phi_i=1$.
For $\mathbb{P}^{VALD}$, $\mathbb{P}^{VALD} = \sum_{i=1}^m \varphi_i \mathbb{P}^i_{XY}$ where $\varphi$ can be dynamic (our adapted Mixup covers all convex combinations) and $\sum_i \varphi_i=1$.
Therefore, $d_{\mathcal{H}\Delta \mathcal{H}}(\mathbb{P}^T, \mathbb{P}^{S}_{val}) = C, C$ is constant while $d_{\mathcal{H}\Delta \mathcal{H}}(\mathbb{P}^T, \mathbb{P}^{VALD})$ can be small enough which means evaluation on \vald achieves a more accurate estimation on the unseen target.

\subsubsection{General Case}

The general case is more difficult. 
For \vald, $\varphi$ can be viewed as dynamic, which means \vald covers the origin validation data distribution and has a tighter upper bound in \equationname~\eqref{eqa:dg2} (shown in \figurename~\ref{fig:case2}).


\subsection{Method Summary}


We update $\boldsymbol{\omega}$ every $B$ iterations, where $B$ can be set arbitrarily.
We first generate \vald and \optd.
When optimization, we first obtain $\boldsymbol{\omega}$ and then utilize $\boldsymbol{\omega}$ to weigh different objects.
After updating the model $h$, we evaluate it on \vald and record the best one according to the accuracy on \vald.

\subsection{Discussion}
In the above implementation, we mainly rely on DANN. 
Actually, our techniques not only work based on DANN but also can be able to enhance other traditional methods, e.g. CORAL.
For CORAL, we only need to replace the adversarial part of DANN with the covariance alignment. 
We will show the performance for CORAL in experiments.

Surprisingly, our techniques can even play an important role in improving ERM.
Each source can be viewed as an independent goal, and thereby we have $M$ objects when we have $M$ sources.
\optd guides the weighting of different sources when training for better generalization and it can be considered as a sample or domain weighting technique. 

In our implementation, we mainly utilize the gradients of all data in \optd for guidance, which can save time.
But utilizing the gradients of batch data in \optd and lasting dynamic changes for $\boldsymbol{\omega}$ can be another possible way for the implementation.

\section{Experiment}
We evaluate the proposed techniques on three time-series benchmarks.\footnote{Our implementations rely on Mixup which performs better on time series. We also provide experiments on computer vision in Sec B.3 of APPENDIX.}
\subsection{Datasets}
UCI daily and sports dataset (\textbf{DSADS})~\cite{barshan2014recognizing} contains data with 19 activities collected from 8 subjects wearing body-worn sensors on 5 body parts.
We divide DSADS into four domains according to subjects and each domain contains two subjects, $[(0,1),(2,3),(4,5),(6,7)]$ where the digit is the subject number.
Therefore, we construct four domains and different domains have different distributions (In some papers~\cite{wang2018stratified}, it is also called Cross-Person.).
We use $0, 1, 2, 3$ to denote the four divided domains.
USC-SIPI human activity dataset (\textbf{USC-HAD})~\cite{zhang2012usc} contains data of 14 subjects (7 male, 7 female, aged from 21 to 49) executing 12 activities with a sensor tied on the front right hip. 
The data dimension is 6 and the sample rate is 100Hz.
Similar to DSADS, we divide data into four domains. 
PAMAP2 physical activity monitoring dataset (\textbf{PAMAP2})~\cite{reiss2012introducing} contains data of 18 different physical activities, performed by 9 subjects wearing 3 sensors. 
The sampling frequency is 100Hz and the data dimension is 27.
Similar to DSADS, we divide data into four domains. \footnote{More details can be found in Sec. B.1 of APPENDIX.} 



\subsection{Experimental Setup}
We adopt sliding window~\cite{bulling2014tutorial} with 50\% overlap to construct samples.
We compared our technique with four 
state-of-the-art methods\footnote{Our proposed techniques can be embedded in many methods and be viewed as plugins.
Therefore, we do not compare ours to lots of methods but focus on the methods enhanced by them.}:
ERM, DANN~\cite{ganin2015unsupervised}, ANDMask~\cite{parascandolo2020learning}, and GILE~\cite{qian2021latent}.

We split data of the source domains into the training splits and validation splits. 
The training splits are used to train the model while the validation splits are utilized to select the best model.
In practice, $80\%$ of all source data serve as training while the rest are for validation.
Although our techniques do not require the validation splits (we generate \vald), we still only utilize the same training splits as comparison methods for fairness. 
For testing, all methods, including ours, report the performance on all data of the target domain.

We implement all methods with PyTorch~\cite{paszke2019pytorch}.
For GILE, we directly utilize their public code. 
For the other methods, we utilize the same architecture that contains two blocks, and each has one convolution layer, one pool layer, and one batch normalization layer.
Another single fully-connect layer serves as the classification layer. 
We utilize a batch with 32 samples for each domain in each iteration, and the maximum training epoch is set to 150.
An Adam optimizer with a learning rate $10^{-2}$ and weight decay $5\times 4^{-4}$ is used for optimization.
We tune hyperparameters for each method and report the average results of three trials.

\subsection{Experimental Results}

\begin{table*}[htbp]
\vspace{-.3in}
\centering
\caption{Results on DSADS, USC-HAD, and PAMAP2. The \textbf{bold} means the best.}
\label{tab-har}
\resizebox{.9\textwidth}{!}{
\begin{tabular}{l|ccccc|ccccc|ccccc}
\toprule
Methods  &\multicolumn{5}{c|}{DSADS}&\multicolumn{5}{c|}{USC-HAD}&\multicolumn{5}{c}{PAMAP2}         \\ 
          & 0              & 1              & 2              & 3              & AVG            & 0              & 1              & 2           & 3             & AVG            & 0              & 1              & 2              & 3              & AVG           \\ \midrule
ERM       & 89.69          & 81.45          & 81.05          & 78.20          & 82.60          & 80.33          & 59.88          & 74.15          & 73.93          & 72.07          & 87.28          & 73.10          & 49.03          & 78.76          & 72.04          \\
ANDMask   & 85.35          & 73.07          & 85.04          & 82.06          & 81.38          & 79.51          & 61.53          & 76.32          & 65.52          & 70.72          & 88.22          & 79.11          & 53.35          & 83.22          & 75.97          \\
GILE      & 79.67          & 75.00          & 77.00          & 67.00          & 74.65          & 78.67          & 63.00          & \textbf{77.00} & 61.67          & 70.08          & 83.33          & 68.67          & 44.00          & 76.67          & 68.25          \\
DANN      & 87.54          & 81.27          & 78.42          & 83.03          & 82.57          & 81.33          & 64.02          & 72.91          & 66.37          & 71.16          & 88.93          & 75.60          & 47.35          & 86.78          & 74.66          \\
DANN+Ours & \textbf{93.33} & \textbf{88.77} & \textbf{91.75} & \textbf{84.78} & \textbf{89.66} & \textbf{81.98} & \textbf{64.32} & 74.84          & \textbf{78.40} & \textbf{74.89} & \textbf{89.23} & \textbf{81.36} & \textbf{61.71} & \textbf{89.28} & \textbf{80.40}
\\ \bottomrule 
\end{tabular}}
\vspace{-.2in}
\end{table*}

The results on DSADS, USC-HAD, PAMAP2 are shown in \tablename~\ref{tab-har}.
On average, our proposed techniques substantially improve DANN, and outperform the second-best methods: $7.36\%$ for DSADS, $2.82\%$ for USC-HAD, and $4.43\%$ for PAMAP2.
Compared to vanilla DANN, ours has a larger improvement, showing the advantage of our model optimization and selection techniques.

We observe some more insightful conclusions.
(1) When do our techniques work?
Our methods work in almost all situations if a correct way can be adopted.
From all three tables, we can see that our techniques improve vanilla DANN on every task, which demonstrates superiority. 
DANN+Ours performs slightly worse than  GILE in the third task for USC-HAD, it can be due to that vanilla DANN performs the worst in this task. 
Our method can only improve DANN, but cannot completely get rid of influence from DANN.
To pursue better performance, we will introduce our techniques to some latest methods, which is our future work.
(2) When do our techniques perform mediocrely?
Our techniques have dramatic improvements for some tasks, e.g. the first task for DSADS, while these two techniques perform mediocrely for some other tasks, e.g. the second task for USC-HAD.
There can be many factors being able to affect performance, e.g. randomness, task difficulty, sample volume, and so on. 
For example, when the target domain is far away from sources, \optd and \vald cannot represent them. 
In some extreme situations, the performance on target data even cannot equal the generalization capability, since it is a really difficult problem.
Moreover, Mixup can be too simple to generate good enough data.
(3) Can alignments or some other generalization methods always have improvements?
The answer is obviously no.
Many factors influence the final generalization, e.g. data quantity, diversity, and distribution discrepancy.
On some tasks, such as the first task for DSADS and the third task for USC-HAD, ERM even performs better than DANN.
These results demonstrate that generalization methods, e.g. alignments, may have a negative influence on classification capability, which proves our motivation again.

\subsection{Analysis}

\subsubsection{Ablation Study}

We perform ablation study in this section and the results are shown in \figurename~\ref{fig-abla}.
In \figurename~\ref{fig:abl-dsads}, with \optd or \vald, we can see that our optimization technique achieves a remarkable improvement compared to vanilla DANN. 
Moreover, with both two techniques, there exists another improvement on average. 
However, when taking a closer look at each task, we can find that \optd or \vald can lead slight performance drops in some tasks. 
The above phenomenon can be normal since we have analyzed above that there can be many factors, e.g. distribution discrepancy, influence performance, and Mixup cannot always work well due to its simplicity and uncertainty. 
Moreover, just as in \figurename~\ref{fig:abl-dsads}, drops are so little that they can be ignored and there exists an improvement overall, which demonstrates the effects of both \optd and \vald.
These results prove that both \optd and \vald have positive effects on performance for DG.
Similar arguments can be concluded in \figurename~\ref{fig:abl-usc}.

\begin{figure}[t!]
\vspace{-.2cm}
	\centering
	\subfigure[DSADS]{
		\label{fig:abl-dsads}
		\includegraphics[width=.2\textwidth]{./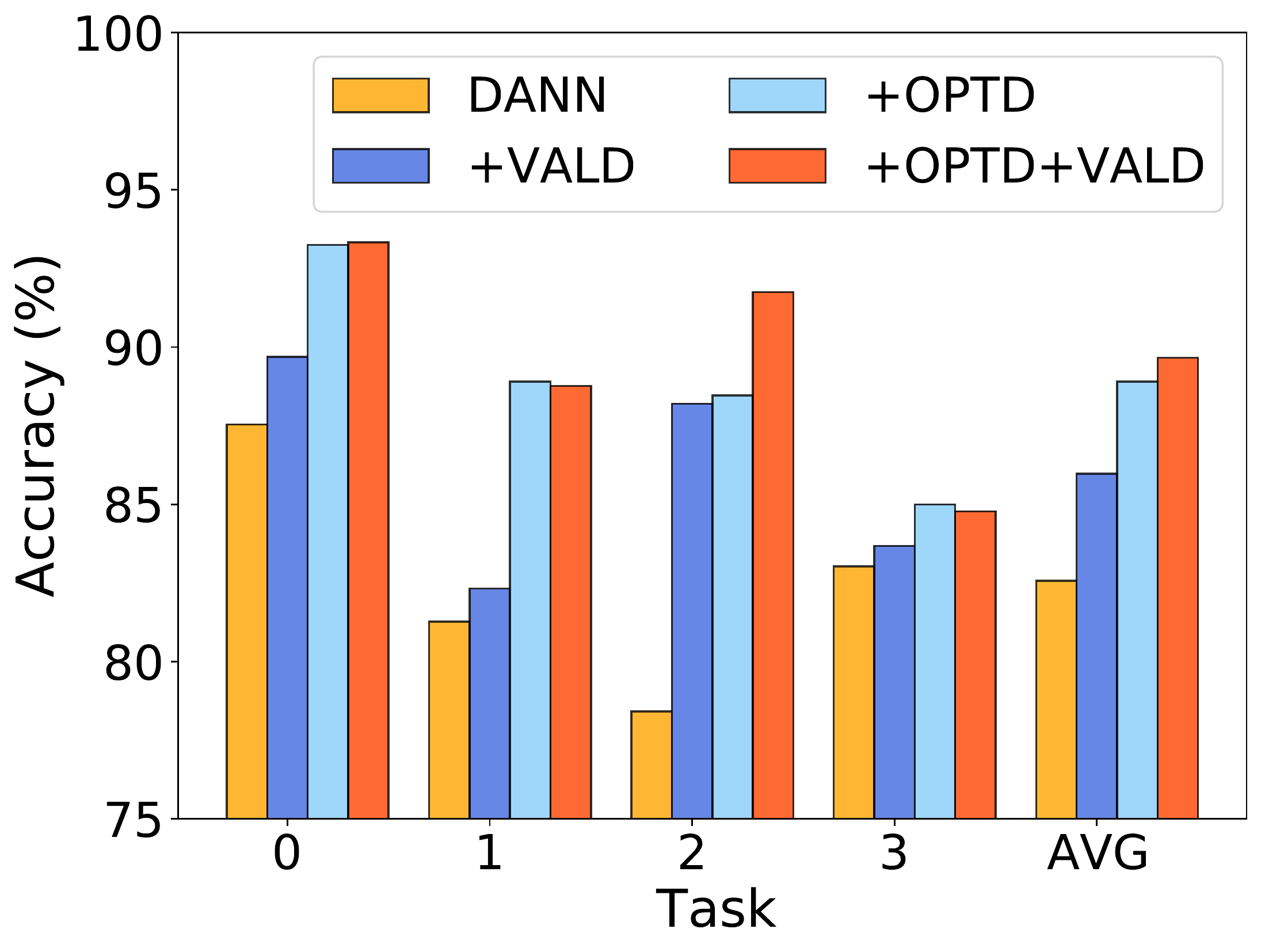}
	}
	\subfigure[USC-HAD]{
		\label{fig:abl-usc}
		\includegraphics[width=.2\textwidth]{./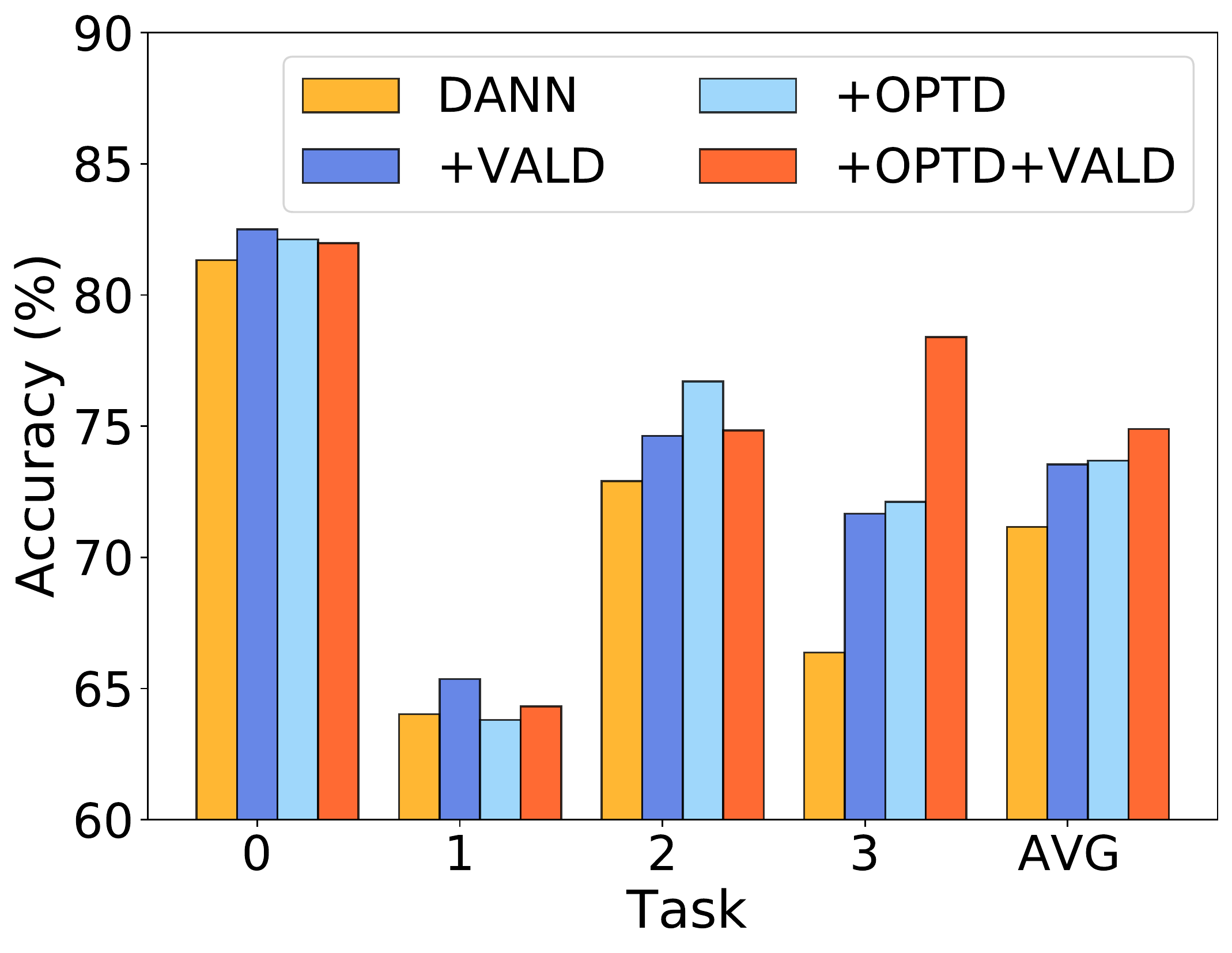}
	}
\vspace{-.3cm}
	\caption{Ablation study on DSADS and USC-HAD.}
	\label{fig-abla}
 \vspace{-.5cm}
\end{figure}

\begin{table}[htbp]
\vspace{-.1in}
\centering
\caption{The results with CORAL.}
\resizebox{0.5\textwidth}{!}{
\begin{tabular}{llllll}
\toprule
\multicolumn{6}{c}{DSADS}             \\ \midrule
ERM        & 89.69          & 81.45          & 81.05          & 78.20          & 82.60          \\
CORAL      & 91.10          & 85.79          & 87.28          & 82.24          & 86.60          \\ 
CORAL+Ours & \textbf{93.11} & \textbf{93.07} & \textbf{89.43} & \textbf{84.43} & \textbf{90.01} \\ \midrule
\multicolumn{6}{c}{USC-HAD}      \\ \midrule
ERM        & 80.33          & 59.88          & 74.15          & 73.93          & 72.07          \\
CORAL      & 81.88          & 60.23          & 74.94          & 68.81          & 71.47          \\
CORAL+Ours & \textbf{82.05} & \textbf{64.87} & \textbf{78.17} & \textbf{76.61} & \textbf{75.43} \\ \midrule
\multicolumn{6}{c}{PAMAP2}    \\ \midrule
ERM        & 87.28          & 73.10          & 49.03          & 78.76          & 72.04          \\
CORAL      & 88.26          & \textbf{79.91} & \textbf{59.39} & 85.85          & 78.35          \\
CORAL+Ours & \textbf{89.84} & 79.27          & 57.81          & \textbf{88.03} & \textbf{78.74}\\ \bottomrule
\end{tabular}
}
\label{tab:coral}
\vspace{-.2in}
\end{table}

\subsubsection{Extensibility}

To demonstrate that our techniques are not only designed for DANN but also can be utilized in some other methods, we also embed them into CORAL, another popular domain generalization method.
The results are shown in \tablename~\ref{tab:coral}.
In \tablename~\ref{tab:coral}, we can see that our techniques have significant improvements on DSADS and USC-HAD while they also have a slight improvement on PAMAP2, which demonstrates that our techniques can be universal and useful.
We have some more observations.
1) For some tasks, e.g. the first task and the second task for PAMAP2, our techniques even reduce performances, which confirms our analysis mentioned above again. 
Domain generalization is a difficult problem and there can be many factors that can have an influence on the final performance.
2) CORAL with our techniques even has a better performance compared to DANN with our techniques, which demonstrates the effectiveness and implicit room of improvement for our techniques.
We will plug our techniques into more latest methods in the future.

\subsection{Discussion}

Please note that this paper is not for finding state-of-the-art methods in domain generalization, but pays all attention to exploring optimization and model selection for DG.
Optimization and model selection is a significant topic in DG but it is still in the infant.
To the best of our knowledge, this paper is the \emph{first} work towards solving these challenges simultaneously.
We choose DANN and CORAL to implement our techniques for the following three reasons.
    1) Firstly, DANN and CORAL are two traditional methods and their performances are satisfactory according to DomainBed~\cite{gulrajani2020search}.
    2)  Secondly, they are simple and easy to embed our two techniques.
    3) Thirdly, with our two techniques, these two methods have shown remarkable performance.
We believe that more techniques will emerge in this extremely important field.

\section{Conclusion}
In this paper, we proposed two Mixup based techniques for model optimization and selection in DG, which are two emerging fields lacking enough studies.
Specifically, on one hand, we utilize adapted Mixup to generate \optd and then utilize gradients of \optd to guide the balance among different objects for domain generalization.
On the other hand, we generate \vald via another adapted Mixup, and select the best model with \vald. 
Extensive experiments demonstrated the effectiveness of these two techniques. 
In the future, we plan to generate more out-of-distribution data with other techniques for robustness and evaluate methods via more metrics.

\bibliographystyle{ref}
\bibliography{refs}

\appendix
\section{Methodology}

\subsection{Theoretical Insights}

\begin{theorem}
\label{thm:da,ben}
(Theorem 2.1 in~\cite{sicilia2021domain}, modified from Theorem 2 in~\cite{ben2010theory}). Let $\mathcal{X}$ be a space and $\mathcal{H}$ be a class of hypotheses corresponding to this space. Suppose $\mathbb{P}$ and $\mathbb{Q}$ are distributions over $\mathcal{X}$. Then for any $h\in \mathcal{H}$, the following holds
\begin{equation}
    \label{eqa:da}
    \varepsilon_{\mathbb{Q}}(h) \leq \lambda'' + \varepsilon_{\mathbb{P}}(h)+ \frac{1}{2} d_{\mathcal{H}\Delta \mathcal{H}}(\mathbb{Q}, \mathbb{P})
\end{equation}
with $\lambda''$ the error of an ideal joint hypothesis for $\mathbb{Q}, \mathbb{P}$.
\end{theorem}

\begin{proposition}
\label{pro:dg2}
Let $\mathcal{X}$ be a space and $\mathcal{H}$ be a class of hypotheses corresponding to this space. Let $\mathbb{Q}$ and the collection $\{\mathbb{P}^i \}_{i=1}^M$ be distributions over $\mathcal{X}$ and let $\{\varphi_i \}_{i=1}^M$ be a collection of non-negative coefficient with $\sum_i \varphi_i = 1$. Let the object $\mathcal{O}$ be a set of distributions such that for every $\mathbb{S}\in \mathcal{O}$ the following holds
\begin{equation}
\label{eqa:dg2-con}
     d_{\mathcal{H}\Delta \mathcal{H}}(\sum_i \varphi_i\mathbb{P}^i, \mathbb{S}) \leq \max_{i,j} d_{\mathcal{H}\Delta \mathcal{H}}(\mathbb{P}^i,\mathbb{P}^j).
\end{equation}
Then, for any $h\in \mathcal{H}$, 
\begin{equation}
    \label{eqa:dg2}
    \begin{aligned}
    \varepsilon_{\mathbb{Q}}(h)\leq& \lambda' + \sum_i \varphi_i \varepsilon_{\mathbb{P}^i}(h) + \frac{1}{2}\min_{\mathbb{S}\in\mathcal{O}}  d_{\mathcal{H}\Delta \mathcal{H}}(\mathbb{S}, \mathbb{Q})\\& + \frac{1}{2}\max_{i,j} d_{\mathcal{H}\Delta \mathcal{H}}(\mathbb{P}^i, \mathbb{P}^j)
    \end{aligned}
\end{equation}
where $\lambda'$ is the error of an ideal joint hypothesis. 
\end{proposition}

\begin{proof}
On one hand, with \theoremname~\ref{thm:da,ben}, we have 
\begin{equation}
    \label{eqa:p1s1}
    \varepsilon_{\mathbb{Q}}(h) \leq \lambda_1 +\varepsilon_{\mathbb{S}}(h)+ \frac{1}{2}d_{\mathcal{H}\Delta \mathcal{H}}(\mathbb{S}, \mathbb{Q}), \forall h\in \mathcal{H}, \forall\mathbb{S}\in \mathcal{O}.
\end{equation}

On the other hand, with \theoremname~\ref{thm:da,ben}, we have 
\begin{equation}
    \label{eqa:p1s2}
    \varepsilon_{\mathbb{S}}(h) \leq \lambda_2 +\varepsilon_{\sum_i \varphi_i\mathbb{P}^i}(h)+ \frac{1}{2}d_{\mathcal{H}\Delta \mathcal{H}}(\sum_i \varphi_i\mathbb{P}^i, \mathbb{S}), \forall h\in \mathcal{H}.
\end{equation}

Since $\varepsilon_{\sum_i \varphi_i\mathbb{P}^i}(h) = \sum_i \varphi_i\varepsilon_{\mathbb{P}^i}(h)$, 
and $d_{\mathcal{H}\Delta \mathcal{H}}(\sum_i \varphi_i\mathbb{P}^i, \mathbb{S}) \leq \max_{i,j} d_{\mathcal{H}\Delta \mathcal{H}}(\mathbb{P}^i,\mathbb{P}^j)$, we have
\begin{equation}
    \label{eqa:p1s3}
    \begin{aligned}
    \varepsilon_{\mathbb{Q}}(h) \leq & \lambda' + \sum_i \varphi_i \varepsilon_{\mathbb{P}^i}(h)+ \frac{1}{2}d_{\mathcal{H}\Delta \mathcal{H}}(\mathbb{S}, \mathbb{Q})\\ & +\frac{1}{2}\max_{i,j}d_{\mathcal{H}\Delta \mathcal{H}}(\sum_i \varphi_i\mathbb{P}^i, \mathbb{S}), \forall h\in \mathcal{H}, \forall\mathbb{S}\in \mathcal{O}.
    \end{aligned}
\end{equation}

\equationname~\eqref{eqa:p1s3} holds for all $\mathbb{S}\in \mathcal{O}$. Proof ends.
\end{proof}

\subsection{Method Summary}

\begin{algorithm}[htbp]
\caption{The process of our methods.}
\label{alg:algorithm}
\textbf{Input}: A model $h$, data of $M$ sources $\{\mathcal{D}_i\}_{i=1}^M$, $\alpha$\\
\textbf{Output}: Well-trained model $h^*$
\begin{algorithmic}[1] 
\State Initial $h_f, h_c$.
\State Initial $h^*=h, bestv=0$.
\State Generate \vald according to \equationname~\eqref{eqa:vmix}.
\State Generate \optd according to \equationname~\eqref{eqa:omix1} and \equationname~\eqref{eqa:omix2}.
\While{not convergence and not reaching the max iteration}
\If {Update $\boldsymbol{\omega}$}
\State Compute $\mathbf{G}, \mathbf{g}_{optd}$ with current $h_f$.
\State Compute $\boldsymbol{\omega}$ according  to \equationname~\eqref{eqa:epo}.
\EndIf
\State Compute $\ell$ according to $\boldsymbol{\omega}$.
\State Update $h$ according to $\ell$.
\State Compute $acc_{vald}$, accuracy on \vald.
\If{$acc_{vald}>bestv$}
\State $bestv=acc_{vald}$.
\State $h^*=h$.
\EndIf
\EndWhile
\end{algorithmic}
\end{algorithm}

\subsubsection{Equations}

VALD:

\begin{equation}
	\label{eqa:vmix}
	\begin{aligned}
		\tilde{\mathbf{x}} =\lambda \mathbf{x}_i + (1-\lambda) \mathbf{x}_j,
		\tilde{y} =  y_i = y_j.
	\end{aligned}
\end{equation}

OPTD:

The first part can be formulated as:
\begin{equation}
	\label{eqa:omix1}
	\begin{aligned}
		\tilde{\mathbf{x}} =\lambda \mathbf{x}_i + (1-\lambda) \mathbf{x}_j,
		\tilde{y} =  y_i = y_j,
		\text{where}~d_i \neq d_j.
	\end{aligned}
\end{equation}

And the second part can be formulated as:
\begin{equation}
	\label{eqa:omix2}
	\begin{aligned}
		\tilde{\mathbf{x}} =\lambda \mathbf{x}_i + (1-\lambda) \mathbf{x}_j,
		\tilde{y} =  \lambda y_i + (1 - \lambda)  y_j,
		\text{where}~d_i = d_j.
	\end{aligned}
\end{equation}

Computation:

\begin{equation}
    \begin{aligned}
        \boldsymbol{\omega}^* =  &\arg\max_{\boldsymbol{\omega}\in \Delta^{m-1}} (\mathbf{G}\boldsymbol{\omega})^T(I(\ell_{optd} > 0 )\mathbf{g}_{optd}\\
        &+I(\ell_{optd}=0) \mathbf{G}\mathbf{1}/m),\\
        s.t.&(\mathbf{G}\boldsymbol{\omega})^T\mathbf{g}_j\geq I(J\neq \emptyset)(\mathbf{g}_{optd}^T \mathbf{g}_j), \forall j\in \bar{J} - J^*,\\
        & (\mathbf{G}\boldsymbol{\omega})^T\mathbf{g}_j\geq 0, \forall j\in J^*.
    \end{aligned}
    \label{eqa:epo}
\end{equation}

\algorithmname~\ref{alg:algorithm} gives the overall process of our techniques.
We update $\boldsymbol{\omega}$ every $B$ iterations, where $B$ can be set arbitrarily.
As shown in \algorithmname~\ref{alg:algorithm}, we first generate \vald and \optd.
When optimization, we first obtain $\boldsymbol{\omega}$ and then utilize $\boldsymbol{\omega}$ to weigh different objects.
After updating the model $h$, we evaluate it on \vald and record the best one according to the accuracy on \vald.

\section{Experiment}

\subsection{Dataset Details}

~

UCI daily and sports dataset (\textbf{DSADS})~\cite{barshan2014recognizing} contains data with 19 activities collected from 8 subjects wearing body-worn sensors on 5 body parts.
There exist three sensors, accelerometer, gyroscope, and magnetometer.
19 activities include sitting, standing, lying on back and on right side, ascending and descending stairs, standing in an elevator still, moving around in an elevator, walking in a parking lot, walking on a treadmill with a speed of 4 km/h, running on a treadmill with a speed of 8 km/h, exercising on a stepper, exercising on a cross trainer, cycling on an exercise bike in horizontal and vertical positions, rowing, jumping, and playing basketball.
We divide DSADS into four domains according to subjects and each domain contains two subjects, $[(0,1),(2,3),(4,5),(6,7)]$ where the digit is the subject number.
We use $0, 1, 2, 3$ to denote the four divided domains.

USC-SIPI human activity dataset (\textbf{USC-HAD})~\cite{zhang2012usc} contains data of 14 subjects (7 male, 7 female, aged from 21 to 49) executing 12 activities with a sensor tied on the front right hip. 
12 activities include Walking Forward, Walking Left, Walking Right, Walking Upstairs, Walking Downstairs, Running Forward, Jumping Up, Sitting, Standing, Sleeping, Elevator Up, and Elevator Down.
The data dimension is 6 and the sample rate is 100Hz.
Similar to DSADS, we divide data into four domains, $[(1,11,2,0),(6,3,9,5),(7,13,8,10),(4,12)]$. 
We attempt to ensure that each domain has a similar number of data.

PAMAP2 physical activity monitoring dataset (\textbf{PAMAP2})~\cite{reiss2012introducing} contains data of 18 different physical activities, performed by 9 subjects wearing 3 sensors. 
18 activities include lying, sitting, standing, walking, running, cycling, Nordic walking, watching TV, computer work, car driving, ascending stairs, descending stairs, vacuum cleaning, ironing, folding laundry, house cleaning, playing soccer, rope jumping, and other (transient activities).
The sampling frequency is 100Hz and the data dimension is 27.
Similar to DSADS, we divide data into four domains. 
The detailed information is in \tablename~\ref{tb-data-crossp-crossd}.

\begin{table}[htbp]
\centering
\caption{Detailed information on three time-series benchmarks.}
\resizebox{0.5\textwidth}{!}{%
\begin{tabular}{cccccc}
\toprule
Dataset & \#Domain & \#Sensor & \#Class & \#Domain Sample& \#Total\\
\midrule
DSADS&4&3&19&(285,000)$\times$4&1,140,000\\
PAMAP2&4&3&12&(592,600; 622,200; 620,000; 623,400)&2,458,200\\
USC-HAD&4&2&12&(1,401,400;1,478,000;1,522,800;1,038,800)&5,441,000\\
\bottomrule
\end{tabular}%
}
\label{tb-data-crossp-crossd}
\end{table}

\subsection{Details on Comparison Methods}
\begin{itemize}
    \item ERM, a method that combines all source data together and directly trains the model.
    \item DANN~\cite{ganin2015unsupervised}, a method that learns domain-invariant features in an adversarial way.
    \item ANDMask~\cite{parascandolo2020learning}, a method that learns domain-invariant features based on gradients.
    \item GILE~\cite{qian2021latent}, a method that utilizes VAE to decouple domain and classification features.
\end{itemize}

\subsection{More Experimental Results}

We also evaluate the proposed techniques on one visual classification benchmark.

\subsubsection{Dataset}
PACS~\cite{li2017deeper} is an object classification benchmark with four domains, including photos, art-paintings, cartoons, and sketches.
Among different domains, image styles have large discrepancies. 
There exist 9,991 images and each domain has the same seven classes.

\subsubsection{Experimental Setup}
For visual classification, ResNet-18 is applied as the feature net.
We compared our technique with seven popular state-of-the-art methods, including ERM, DANN~\cite{ganin2015unsupervised}, CORAL~\cite{sun2016deep}, Mixup~\cite{zhang2018mixup}, GroupDRO~\cite{sagawa2019distributionally}, RSC~\cite{huang2020self}, and ANDMask~\cite{parascandolo2020learning}.
For all these methods, we re-implement with Pytorch~\cite{paszke2019pytorch} in the same environment for fairness.
We split each source domain with a ratio of 8:2 for training and validation.
The best model can be selected via results on validation datasets. 
In each step, each domain selects 32 samples.
The maximum training epoch is set to 120.
For all methods, the SGD optimizer with an initial learning rate $10^{-3}$ and weight decay $5\times 10^{-4}$ is used.
The learning rate drops by 0.1 at the $70\%$ and $90\%$ of training epochs.
We tune hyperparameters for each method and select the best results to report.

\subsubsection{Experimental Results}
\begin{table}[htbp]
\centering
\vspace{-.3in}
\caption{Results on PACS.}
\label{tab-pacs-r}
\resizebox{0.5\textwidth}{!}{%
\begin{tabular}{lccccc}
\toprule
          & A              & C             & P              & S              & AVG           \\ \midrule
ERM       & \textbf{81.84} & 74.45          & \underline{96.35}    & 70.40           & 80.76          \\
CORAL     & 79.98          & 74.70           & 93.77          & \underline{79.51}    & 81.99          \\
Mixup     & 79.83          & 72.06          & 95.09          & 76.92          & 80.97          \\
GroupDRO  & 78.03          & 73.25          & 93.47          & \textbf{80.48} & 81.31          \\
RSC       & \underline{81.59}    & \underline{75.64}    & \textbf{96.71} & 72.92          & 81.71          \\
ANDMask   & 79.98          & 74.15          & 95.87          & 73.71          & 80.93          \\
DANN      & 81.30           & 75.30           & 95.51          & 76.30           & \underline{82.10}     \\
DANN+Ours & \textbf{81.84} & \textbf{77.01} & 94.85          & 78.14          & \textbf{82.96}
\\ \bottomrule
\end{tabular}}
\vspace{-.2in}
\end{table}
The results on PACS are shown in \tablename~\ref{tab-pacs-r}.
On average, our proposed techniques improve DANN and outperform the second-best method, 0.86\%.
We observe some more insightful conclusions. 
(1) Will specially designed methods always work?
The answer is obviously no.
For the first task on PACS, ERM without any artificial designs even performs best.
Two latest methods, Mixup and ANDMask, have similar performances to ERM.
(2) Is the improvement, $0.8\%$, significant?
This answer is uncertain.
PACS is a difficult task in DG. 
Compared to the baseline, ERM, other state-of-the-art methods only have slight improvements.
The largest improvement cannot reach $1.5\%$.
Therefore, $0.8\%$ can be an acceptable improvement. 
However, from the view of absolute value, $0.8\%$ is a small value.
(3) When do our techniques fail or behave normally?
Since our techniques are based on Mixup, they can be inevitably affected by the performance of Mixup.
When Mixup performs terribly, ours cannot have significant improvements.
However, from \tablename~\ref{tab-pacs-r}, we can see that ours with DANN still achieve the best.
The results demonstrate the effectiveness and superiority of our techniques.
For visual classification, it seems that direct Mixup cannot ensure good results. 
And in the future, we will design better and more suitable data generation methods for visual classification.

\subsection{More Experimental Analysis}

\begin{figure}[t!]
	\centering
	\subfigure[Alpha (Mixup)]{
		\label{fig:sens-alpha}
		\includegraphics[width=.2\textwidth]{./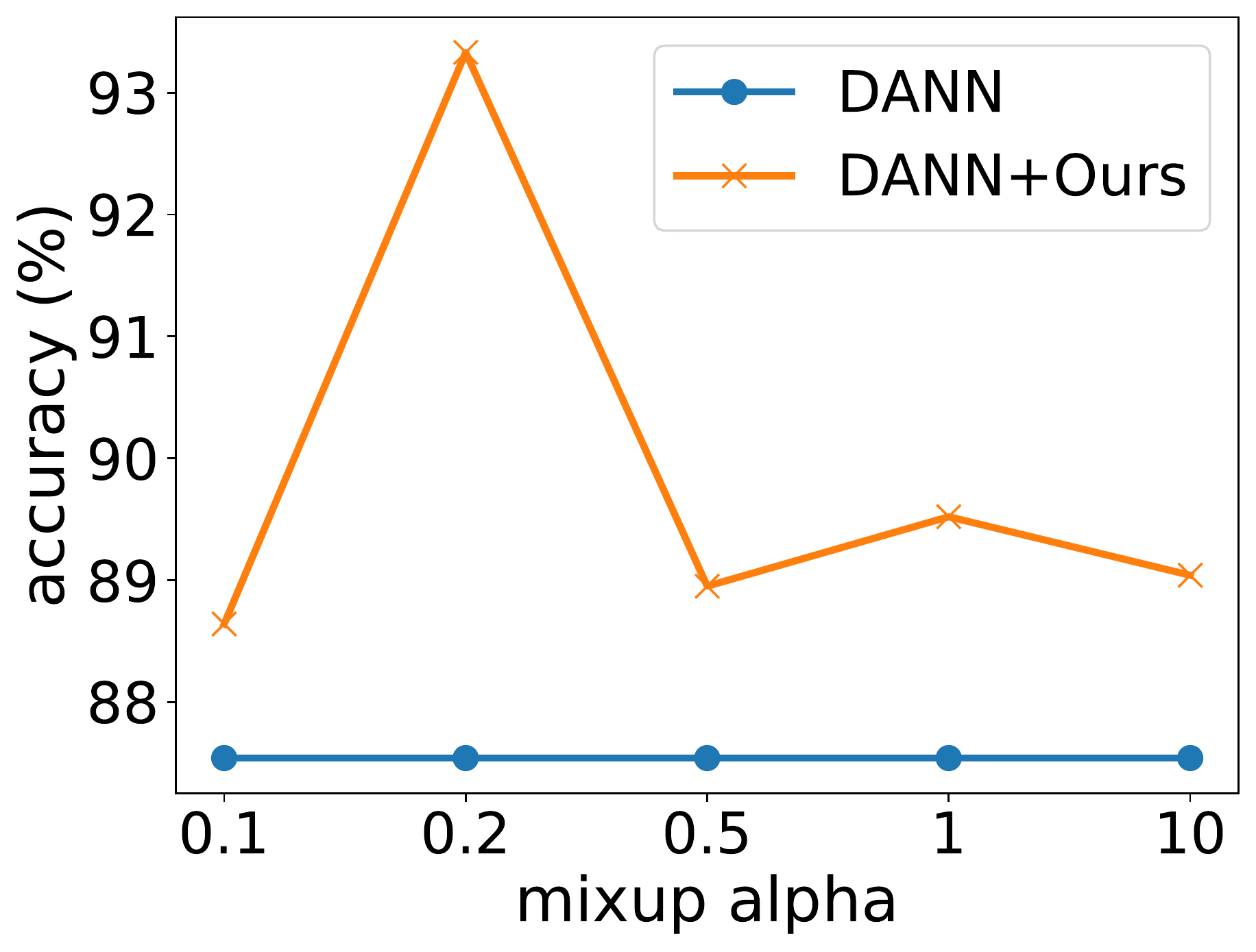}
	}
	\subfigure[Different optimization]{
		\label{fig:sens-comput}
		\includegraphics[width=.2\textwidth]{./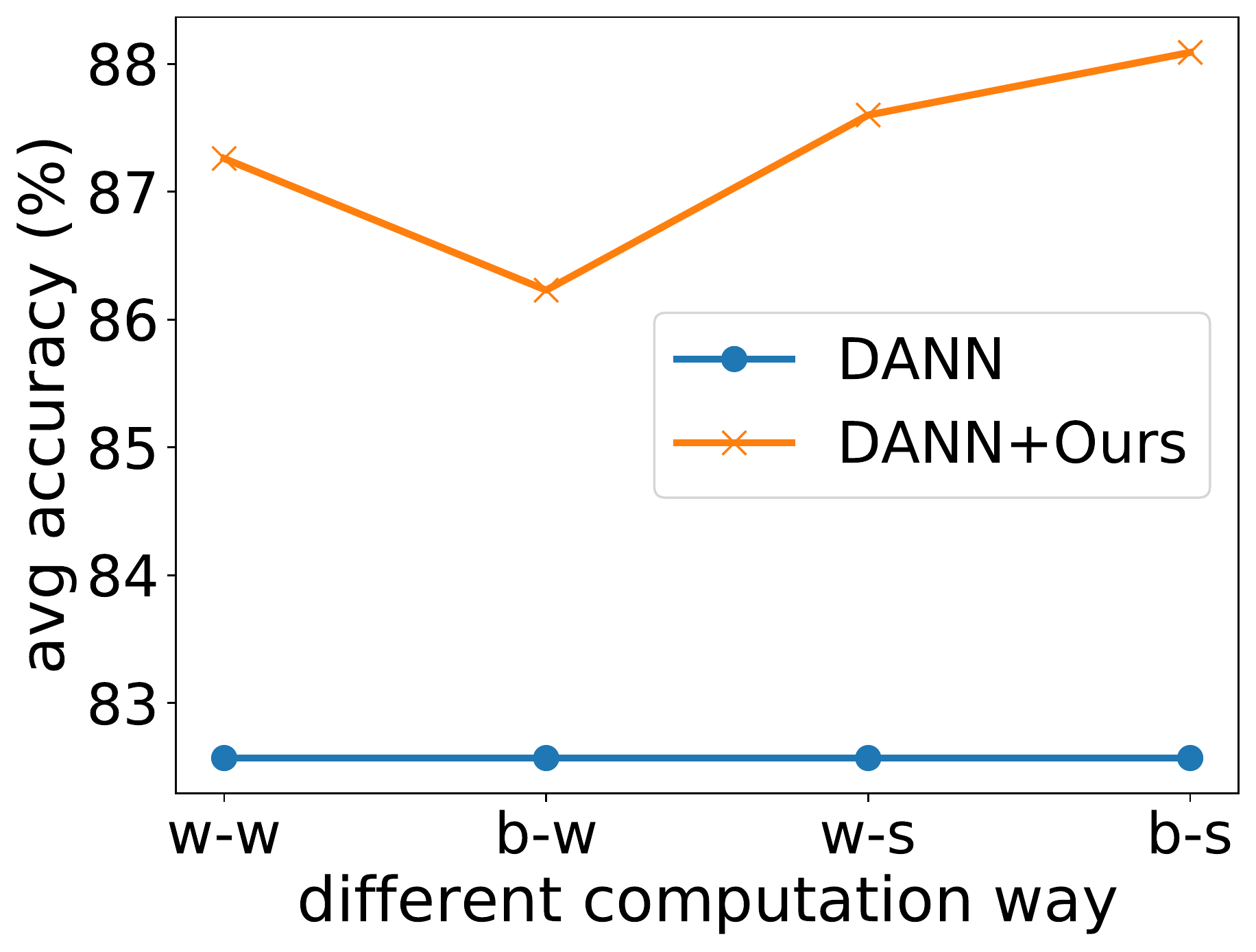}
	}
	\caption{Parameter sensitivity on DSADS.}
	\label{fig-sens}
\end{figure}

\subsubsection{Parameter Sensitivity Analysis}

We evaluate the parameter sensitivity of our technique in \figurename~\ref{fig-sens}.
There are mainly two kinds of hyperparameters in our techniques, $\alpha$ for Mixup and computation ways for optimization.\footnote{For simplicity, we utilize the same $\alpha$ for \optd and \vald. We believe there can be further improvements with finer tuning.}
In \figurename~\ref{fig:sens-alpha}, we can see that results with different Mixup hyperparameters all have improvements compared to vanilla DANN, which demonstrates the superiority of our techniques.
In \figurename~\ref{fig:sens-comput}, different computation ways for optimization mean different ways to compute gradients and different ways to weigh the objects.
w-w represents computing the mean gradients of the whole \optd and viewing the classification of sources as a whole objective (in this case, there are two objectives in total).
b-w represents computing the mean gradients of a batch in \optd and viewing the classification of sources as a whole objective.
w-s represents computing the mean gradients of the whole \optd and viewing the classification of each source as an independent objective (in this case, there are four objectives in total).
From \figurename~\ref{fig:sens-comput}, we can see that our techniques achieve remarkable improvements whatever computation way adopted, which demonstrates the robustness and superiority of our techniques.
In a nutshell, the results demonstrate that our techniques can be effective and robust that can be easily applied to methods in domain generalization.


\end{document}